%% file: 0main.tex
\documentclass{article}

\usepackage[preprint]{neurips_2026}

\usepackage[utf8]{inputenc} 
\usepackage[T1]{fontenc}    
\usepackage{hyperref}       
\usepackage{url}            
\usepackage{booktabs}       
\usepackage{amsfonts}       
\usepackage{nicefrac}       
\usepackage{microtype}      
\usepackage{xcolor}         
\usepackage{multirow}
\usepackage{xspace}
\usepackage{amsmath}
\usepackage[framemethod=TikZ]{mdframed}
\usepackage{enumitem}
\usepackage{graphicx}
\usepackage{booktabs}
\usepackage[ruled,vlined]{algorithm2e}
\usepackage[accsupp]{axessibility}
\usepackage[most]{tcolorbox}
\usepackage{fvextra}
\usepackage{wrapfig}
\usepackage{booktabs}
\usepackage{amsthm}
\usepackage[utf8]{inputenc}
\usepackage{amssymb} 
\newtheorem{proposition}{Proposition}

\newcommand{\ours}{\texttt{MARGO}\xspace}

\title{Mitigating Factual Hallucination in Large Reasoning Models via Mixed-Mode Advantage Regularization}

\author{%
  Kaishen Wang, Tong Zheng, Xuehao Cui, Ruibo Chen, Tianyi Xiong, Heng Huang \\
  Department of Computer Science, University of Maryland, College Park\\
  \texttt{kaishen@umd.edu} \\
}

\begin{document}

\maketitle

\begin{abstract}
Large reasoning models (LRMs) improve language model capabilities by generating explicit thinking traces before final answers. In factuality-oriented question answering (QA), such thinking often improves overall performance by helping the model recover relevant knowledge and refine its answers. However, we find that this benefit is not uniform at the instance level: explicit thinking can also overturn correct non-thinking answers and lead to factual drift. We refer to this failure mode as \emph{thinking-induced hallucination}. To explain this phenomenon, we formulate explicit thinking in factuality QA as a thinking residual over the model's direct-answer tendency, which can either recover missing knowledge or introduce unsupported associations. Based on this formulation, we propose \ours, \underline{\textit{M}}ixed-Mode \underline{\textit{A}}dvantage \underline{\textit{R}}egularization for \underline{\textit{G}}rounded \underline{\textit{O}}ptimization, a reinforcement learning framework that uses non-thinking rollouts as same-model references in advantage estimation. By constructing mixed-mode rollout groups with both thinking and non-thinking trajectories, \ours evaluates whether explicit thinking adds factual value beyond direct answering, thereby suppressing hallucination-prone thinking while preserving beneficial thinking behaviors. Experiments across multiple factuality-oriented QA benchmarks demonstrate that \ours improves factual reliability over strong baselines, while evaluations on mathematical benchmarks show that it preserves general reasoning ability.
\end{abstract}

\input{1intro}
\input{2related}
\input{3method}
\input{4exp}

\input{5con}

\bibliographystyle{plainnat}
\bibliography{reference}

\newpage
\input{appendix}


\end{document}

%% file: 1intro.tex
\section{Introduction}

Large reasoning models (LRMs) have recently achieved strong performance by generating explicit thinking traces before final answers~\citep{jaech2024openai,guo2025deepseek,yang2025qwen3,hou2025t1}. This paradigm is especially effective for mathematics, coding, and symbolic reasoning, where intermediate steps help decompose problems, verify partial results, and derive final answers~\citep{wei2022chain,wang2022self,zheng2023progressive,gao2023pal}. Explicit thinking can also benefit factuality-oriented question answering (QA), as reasoning traces may help models recall relevant knowledge, compare candidate facts, and refine final responses. As a result, thinking has become a widely used strategy for improving modern large language models (LLMs).

However, in factuality QA, the benefit of thinking is not uniformly positive at the instance level. To examine how thinking changes model predictions, we compare non-thinking and thinking modes on TriviaQA~\citep{joshi2017triviaqa}. As shown in Table~\ref{tab:preliminary-experiments}, on Qwen3-8B~\citep{yang2025qwen3}, thinking corrects 11.74\% of examples that are originally incorrect under non-thinking, confirming its factual gains. At the same time, it overturns 7.50\% of originally correct non-thinking answers into incorrect predictions. We call this failure mode \emph{thinking-induced hallucination}, where explicit thinking introduces misleading or unsupported content and changes a correct direct answer into a hallucinated one. This is particularly concerning because the correct answer is already accessible to the same model under the non-thinking mode, but is lost after generating an explicit thinking trace.

This instability stems from a key difference between factual QA and derivation-based reasoning. In mathematical reasoning, intermediate steps can often derive the answer from given premises. In factual QA, however, the answer usually depends on retrieving specific knowledge~\citep{petroni2019language,roberts2020much,lewis2020retrieval}. When the model is uncertain about the relevant fact, longer thinking does not create new evidence; instead, it may introduce unsupported associations, amplify uncertainty, and cause factual drift~\citep{lyu2023faithful,huang2025survey,su2025between,wang2025damo}. Thus, the central challenge is to preserve the factual gains of thinking while preventing reasoning traces from corrupting correct direct-answer tendencies. These observations motivate a residual view of explicit thinking in factuality QA. The non-thinking response reflects the model's direct factual tendency before an explicit reasoning trace is introduced, while the thinking response adds a reasoning residual on top of this tendency. This residual is helpful when it recovers the correct fact, but harmful when it introduces unsupported associations, entity confusion, or speculative intermediate claims. From this perspective, factuality optimization should account for how thinking changes the model's direct-answer tendency, rather than rewarding final answers in isolation.

\begin{figure}
    \centering
    \includegraphics[width=0.95\linewidth]{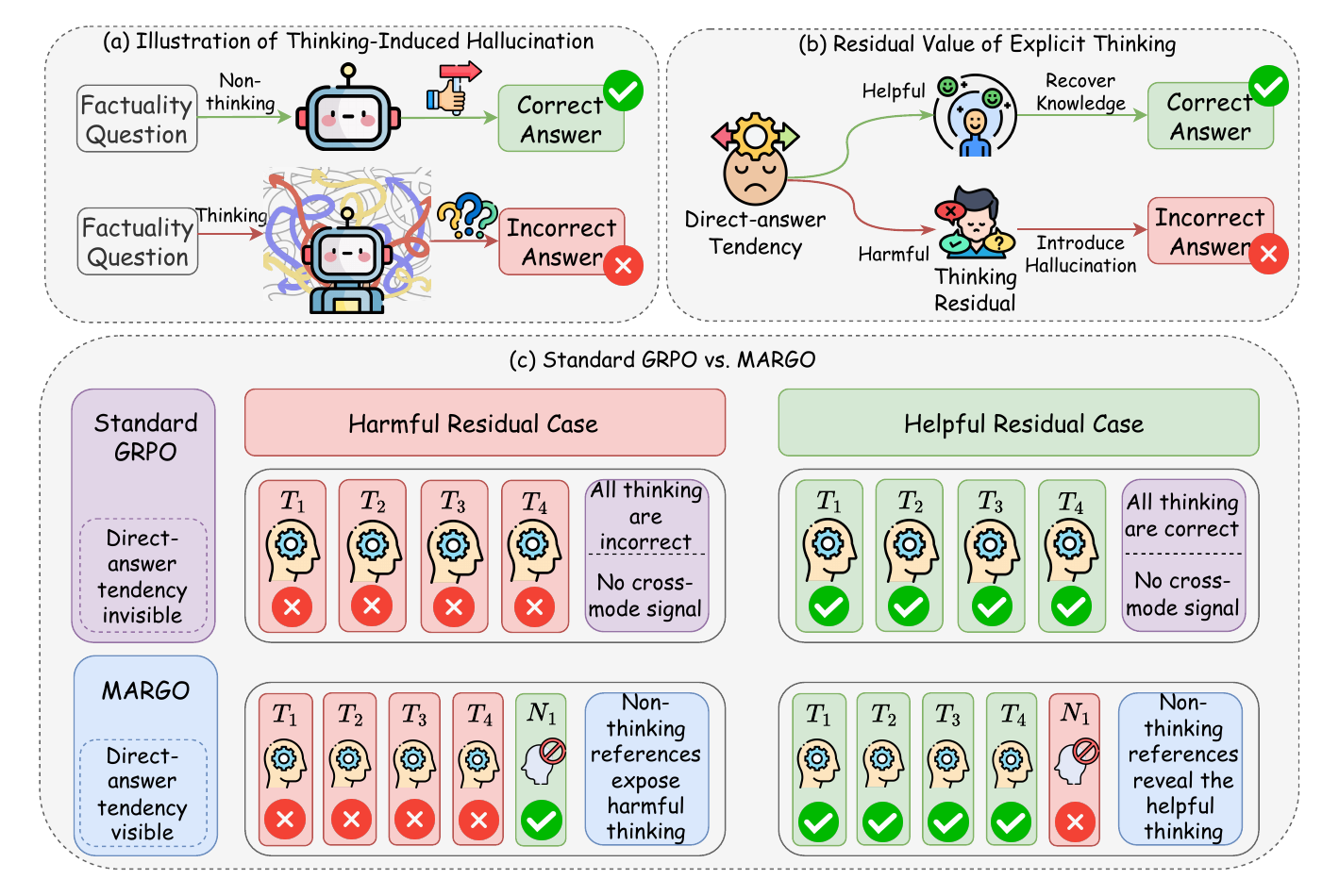}
    \vspace{-2mm}
    \caption{Overview of \ours. (a) Explicit thinking can induce hallucination by overturning a correct non-thinking answer into an incorrect prediction. (b) We view explicit thinking in LRMs as a residual over the model's direct-answer tendency, which can be either helpful or harmful. (c) Unlike standard GRPO, which compares only thinking rollouts, \ours uses non-thinking rollouts as same-model references to suppress harmful thinking while preserving helpful thinking.}
    \label{fig:overall_fig}
    \vspace{-7mm}
\end{figure}

Building on this interpretation, we propose \ours, \underline{\textit{M}}ixed-Mode \underline{\textit{A}}dvantage 
\underline{\textit{R}}egularization for \underline{\textit{G}}rounded \underline{\textit{O}}ptimization, 
a reinforcement learning framework for mitigating thinking-induced hallucination in factuality QA. Unlike standard GRPO with all-thinking rollout groups, \ours constructs mixed rollout groups containing both thinking and non-thinking trajectories for each question. This design makes the non-thinking response a same-model reference in advantage estimation, so thinking trajectories are optimized relative to the model's direct-answer behavior rather than only against other thinking samples. As a result, hallucination-prone reasoning that corrupts a correct non-thinking answer receives lower relative advantage, while reasoning that improves factual correctness remains reinforced. The overall architecture is shown in Figure~\ref{fig:overall_fig}.

We train \ours and evaluate it on two models across six factuality-oriented QA benchmarks. Experimental results show that \ours consistently improves factual reliability over fixed-mode inference and training-based baselines, including mode selection, supervised fine-tuning, and RL variants trained with the same curated data. Evaluation on mathematical benchmarks further shows that \ours preserves the general reasoning ability of LRMs. The contributions of this paper are summarized as follows:

\begin{itemize}
    \vspace{-1mm}
    \item We identify the \emph{thinking-induced hallucination} issue in factuality-oriented QA, where explicit thinking turns correct non-thinking answers into incorrect ones, and show through instance-level transition analysis that this phenomenon appears consistently across different LRM scales.
    \vspace{-0.75mm}
    \item We formulate explicit thinking in factuality QA as a reasoning residual over the model's direct-answer tendency, and propose \ours, a mixed-mode advantage regularization framework that uses non-thinking rollouts as same-model references to suppress hallucination-prone thinking while preserving trajectories that provide factual value.
\end{itemize}

%% file: 2related.tex
\vspace{-2.5mm}

\section{Related Work}

\vspace{-0.5mm}
\subsection{Hallucination in Large Reasoning Models}
\vspace{-0.5mm}
Although large reasoning models (LRMs) have achieved strong performance on reasoning-intensive tasks, recent studies suggest that explicit reasoning may introduce additional factuality risks. In factuality-oriented settings, long reasoning traces can introduce unsupported intermediate claims, amplify early mistakes, or produce plausible but factually incorrect conclusions~\citep{yao2025reasoning,li2025hallucination,chen2025learning,yu2025hallurnn,chen2025improving,wang2026unsafe}. 
For example, \citep{chen2025learning} show that LRMs generate substantially more hallucinations on long-form factuality benchmarks and propose online RL with rewards balancing factual precision, response detail, and relevance. \citep{li2025hallucination} further analyze hallucination caused by reasoning-oriented RL and propose factuality-aware step-wise policy optimization. Another line of work detects or characterizes hallucinations inside reasoning traces: \citep{sun2025detection} study reasoning hallucination from a mechanistic perspective, where logically coherent but factually incorrect traces lead to persuasive yet faulty conclusions, while \citep{zhangunraveling} analyze LRM hallucinations by representing reasoning trajectories as structured graphs. Different from these studies, we focus on an instance-level mode-induced failure: the same model can answer correctly without thinking, but becomes incorrect after generating an explicit thinking trace.

\vspace{-0.5mm}
\subsection{Reasoning Optimization and Adaptive Thinking}
\vspace{-0.5mm}
Reinforcement learning has become a common approach for improving the reasoning ability of LLMs, with recent methods showing that RL can elicit long-form reasoning behaviors such as exploration, reflection, and self-correction~\citep{ouyang2022training,shao2024deepseekmath,guo2025deepseek}. 
GRPO~\citep{shao2024deepseekmath} further provides a practical framework for reasoning-oriented training by estimating relative advantages within groups of sampled responses~\citep{shao2024deepseekmath}. 
Meanwhile, adaptive thinking methods study how to decide whether, when, or how long a model should reason, often treating thinking as a mode-selection or budget-allocation problem~\citep{wan2025adapthink,zhang2025adaptthink,han2025your,liadaptive,jali2026not,xiong2026multi,zheng2026parallel}. 
Different from these approaches, \ours does not learn a question-level policy for when to think, nor does it optimize only all-thinking rollout groups. 
Instead, it constructs mixed rollout groups containing both thinking and non-thinking trajectories, using non-thinking responses as same-model references to regularize reasoning traces that degrade factuality while preserving those that provide factual value.

\vspace{-1mm}

%% file: 3method.tex
\section{Method}
\vspace{-1mm}
\subsection{Preliminary: Thinking and Non-Thinking Inference Modes}
\label{sec:preliminary}

Recent large reasoning models (LRMs) are typically deployed under two distinct inference modes, namely \emph{thinking} and \emph{non-thinking}. These modes are not selected adaptively by the model itself at inference time; instead, they are determined by the prompting template.

Given an input question $x$, let $o=(o_1,\dots,o_T)$ denote the output sequence generated by an LRM parameterized by $\theta$. The generation process follows the standard autoregressive factorization:
\begin{equation}
P_\theta(o \mid x) = \prod_{t=1}^{T} P_\theta(o_t \mid x, o_{<t}).
\end{equation}
The difference between the two modes lies in the prompting format and the resulting structure of the generated response. In the thinking mode, the model is prompted to generate an explicit thinking trace before producing the final answer. Denoting the reasoning trace by $r$ and the final answer by $y$, the generated response takes the form:
\begin{equation}
o = [\texttt{<think>},\, r,\, \texttt{</think>},\, y],
\end{equation}
where $r$ is a non-empty reasoning segment enclosed by the \texttt{<think>} and \texttt{</think>} tags. In the non-thinking mode, the prompting template already provides an empty reasoning block after the question, i.e., \texttt{<think></think>} is pre-filled as part of the input context. As a result, the model directly generates the final answer $y$ without producing an explicit thinking trace. 

Therefore, the distinction between thinking and non-thinking does not stem from any change in model parameters, but from the prompting template and the induced generation pattern. Although both modes operate on the same underlying model, they may lead to substantially different behaviors.

\subsection{Empirical Observation: Thinking-Induced Hallucination}
\label{sec:motivation}

While LRMs often benefit from additional test-time computation, whether explicit thinking is uniformly beneficial for factuality tasks remains unclear. 
To investigate this, we conduct preliminary experiments comparing the thinking and non-thinking modes on TriviaQA~\citep{joshi2017triviaqa}, a representative factuality benchmark. 
Beyond overall performance, we further perform a fine-grained comparison at the instance level to examine how prediction correctness changes when switching from the non-thinking mode to the thinking mode.

\begin{wraptable}{l}{0.48\linewidth}
\centering
\small
\setlength{\tabcolsep}{6pt}
\begin{tabular}{ccccc}
\hline

\hline
Size & $r_{0,0}$ & $r_{0,1}$ & $r_{1,0}$ & $r_{1,1}$ \\
\hline
1.7B & 62.98\% & 12.32\% & 5.38\% & 19.32\% \\
4B   & 48.06\% & 12.15\% & 5.83\% & 33.96\% \\
8B   & 36.08\% & 11.74\% & 7.50\% & 44.68\% \\
14B & 30.43\% & 10.45\% & 6.75\% & 52.36\% \\
32B  & 28.60\% & 10.22\% & 6.27\% & 54.92\% \\
\hline

\hline
\end{tabular}
\caption{Transition ratios between non-thinking and thinking modes on TriviaQA across Qwen3 models of different scales. Each $r_{i,j}$ denotes the fraction of examples whose prediction changes from correctness $i$ under the non-thinking mode to correctness $j$ under the thinking mode.}
\label{tab:preliminary-experiments}
\end{wraptable}

Formally, for each example $x$, let $c^{\text{N}}(x)\in\{0,1\}$ and $c^{\text{T}}(x)\in\{0,1\}$ denote the correctness of the prediction under the non-thinking mode and the thinking mode, respectively, where $1$ indicates a correct prediction and $0$ otherwise. 
Based on the pair $\big(c^{\text{N}}(x), c^{\text{T}}(x)\big)$, we partition all examples into four transition groups:
\begin{equation}
\tau_{i,j}=\{x \mid c^{\text{N}}(x)=i,\ c^{\text{T}}(x)=j\}, \quad i,j\in\{0,1\}.
\end{equation}
Specifically, $\tau_{0,0}$ denotes the examples that are incorrect under both modes, $\tau_{0,1}$ denotes the examples that change from incorrect to correct, $\tau_{1,0}$ denotes the examples that change from correct to incorrect, and $\tau_{1,1}$ denotes the examples that remain correct under both modes. 
Among these four groups, $\tau_{1,0}$ is particularly important, as it captures the cases where thinking overturns an originally correct non-thinking prediction and induces hallucination. 
We further define the transition ratio of each group as:
\begin{equation}
r_{i,j} = \frac{|\tau_{i,j}|}{|\mathcal{D}|}, \quad i,j\in\{0,1\},
\end{equation}
where $\mathcal{D}$ denotes the evaluation set.
As shown in Table~\ref{tab:preliminary-experiments}, although thinking can correct a portion of non-thinking incorrect predictions (i.e., $r_{0,1}$), there also exists a non-negligible fraction of examples in $r_{1,0}$, where thinking overturns an originally correct non-thinking answer and leads to an incorrect prediction. 
We refer to this failure mode as \emph{thinking-induced hallucination}. 
These results reveal that explicit thinking plays a dual role in factuality QA: it can recover some missed answers, but it can also introduce harmful reasoning that drifts away from an originally correct direct answer. 
We provide additional transition analyses in Appendix~\ref{app:additional-transition}.

We further inspect the thinking traces in $\tau_{1,0}$ and find that they often contain uncertainty or revision markers, such as ``But wait'', ``Hmm, not sure'', ``Wait, no, maybe that's not right'', and ``but I'm getting confused''. 
These traces suggest that thinking-induced hallucination is often not caused by a complete absence of the correct fact, but by instability introduced during the thinking process. 
When the relevant factual knowledge is uncertain, longer thinking does not create new evidence; instead, it may amplify uncertainty, introduce entity confusion, or over-elaborate unsupported associations. 
Since the two modes share the same model parameters and differ mainly by whether an explicit thinking trace is generated, the non-thinking response reveals the model's direct factual tendency before this additional thinking process is introduced. 
This motivates our residual view of explicit thinking: in factuality QA, thinking can be seen as a reasoning residual on top of the model's direct-answer behavior, and the value of this residual should be evaluated relative to the non-thinking reference.

\subsection{Residual Value of Explicit Thinking}
\label{sec:residual_value}

The transition analysis above motivates modeling thinking as a \emph{reasoning residual} over the model's non-thinking behavior. For the same question $x$, the non-thinking response reflects the model's direct factual tendency, while the thinking response adds an explicit thinking trace before producing the final answer. 
The key question is therefore whether this additional reasoning residual provides positive factual value over the non-thinking reference.

Formally, let $\mu_m(x)$ denote the expected factual reward under mode $m\in\{\mathrm{T},\mathrm{N}\}$:
\begin{equation}
\mu_m(x)
=
\mathbb{E}_{y^{m}\sim \pi_\theta(\cdot|x,m)}
\left[
R(x,y^{m})
\right].
\end{equation}
We define the residual value of explicit thinking as:
\begin{equation}
\Delta_{\mathrm{res}}(x)
=
\mu_{\mathrm{T}}(x)-\mu_{\mathrm{N}}(x).
\end{equation}
A positive $\Delta_{\mathrm{res}}(x)$ means that thinking provides factual value beyond direct answering, while a negative value means that thinking introduces harmful residual contamination.

Standard GRPO with all-thinking rollout groups cannot directly capture this residual value. When all trajectories in a group are sampled in the thinking mode, the reward baseline is determined only by other thinking trajectories. As a result, a hallucination-prone thinking trajectory may still receive a favorable relative advantage as long as it performs better than other thinking samples in the same group. This optimizes relative quality within the thinking distribution, but does not evaluate whether thinking itself adds factual value over the model's direct-answer behavior.

To expose the residual value of thinking, we construct mixed rollout groups containing both thinking and non-thinking trajectories. Suppose a rollout group contains thinking and non-thinking trajectories with proportions $\alpha$ and $1-\alpha$, respectively. 
The expected mixed reward baseline is:
\begin{equation}
\mu_{\mathrm{mix}}(x)
=
\alpha\mu_{\mathrm{T}}(x)
+
(1-\alpha)\mu_{\mathrm{N}}(x).
\end{equation}
Ignoring the normalization scale for clarity, the expected relative advantage of a thinking trajectory becomes:
\begin{equation}
\begin{aligned}
A_{\mathrm{mix}}^{\mathrm{T}}(x)
&\propto
R(x,y^{\mathrm{T}})-\mu_{\mathrm{mix}}(x) \\
&=
\underbrace{R(x,y^{\mathrm{T}})-\mu_{\mathrm{T}}(x)}_{\text{within-mode advantage}}
+
(1-\alpha)
\underbrace{\Delta_{\mathrm{res}}(x)}_{\text{residual value}} .
\end{aligned}
\end{equation}
The first term measures the relative quality of a thinking trajectory within the thinking distribution, while the second term adjusts this advantage by the residual value of thinking over the non-thinking reference. Therefore, when $\Delta_{\mathrm{res}}(x)<0$, thinking is expected to introduce harmful residual contamination, and thinking trajectories receive lower relative advantage under the mixed baseline. When $\Delta_{\mathrm{res}}(x)>0$, thinking provides positive residual value and is reinforced. By symmetry, non-thinking trajectories receive the opposite residual adjustment, making them relatively preferred when thinking harms factuality. We present the detailed theoretical analysis in Appendix~\ref{appendix:theory}.


\subsection{Mixed-Mode Advantage Regularization for Grounded Optimization}
\label{sec:grpo_optimization}

Following the residual-value formulation above, we instantiate \ours with GRPO by constructing mixed rollout groups for each factual question. The key idea is to place thinking and non-thinking trajectories into the same group, so that the GRPO advantage reflects not only within-mode reward differences but also whether thinking adds factual value over the model's direct-answer behavior.

Given an input question $x$, we sample both thinking and non-thinking responses:
\begin{equation}
\mathcal{G}(x)
=
\{y^{\mathrm{T}}_1,\ldots,y^{\mathrm{T}}_{K_T}\}
\cup
\{y^{\mathrm{N}}_1,\ldots,y^{\mathrm{N}}_{K_N}\},
\end{equation}
where $y^{\mathrm{T}}_i$ is sampled under the thinking prompt and $y^{\mathrm{N}}_j$ is sampled under the non-thinking prompt. Both modes share the same model parameters and differ only in the prompting template. Thus, the mixed rollout group provides both thinking trajectories and same-model non-thinking references for the same question.

For each response $y_i \in \mathcal{G}(x)$, we compute a reward $r_i$ based on factual correctness and response format. GRPO then normalizes rewards within the mixed group:
\begin{equation}
A_i =
\frac{r_i - \mathrm{mean}(\{r_j\}_{j=1}^{|\mathcal{G}(x)|})}
{\mathrm{std}(\{r_j\}_{j=1}^{|\mathcal{G}(x)|})},
\qquad
\rho_i(\theta)=
\frac{\pi_\theta(y_i\mid x,m_i)}
{\pi_{\theta_{\mathrm{old}}}(y_i\mid x,m_i)},
\end{equation}
where $m_i\in\{\mathrm{T},\mathrm{N}\}$ denotes the prompting mode used to sample $y_i$. This mixed-group normalization implements the residual-value principle in Section~\ref{sec:residual_value}: thinking trajectories are evaluated not only against other thinking samples, but also against non-thinking references for the same question.

The clipped GRPO objective is defined as:
\begin{equation}
\mathcal{J}_{\mathrm{GRPO}}(\theta)
=
\mathbb{E}_{x,\mathcal{G}(x)}
\left[
\frac{1}{|\mathcal{G}(x)|}
\sum_{y_i\in \mathcal{G}(x)}
\min\Big(
\rho_i(\theta)A_i,\,
\mathrm{clip}\big(\rho_i(\theta),1-\epsilon,1+\epsilon\big)A_i
\Big)
\right].
\end{equation}
In practice, we additionally apply KL regularization to stabilize training and constrain the updated policy from drifting too far from the reference model.

\paragraph{Reward for Factuality QA.}
Unlike conventional rule-based RLVR tasks such as mathematical reasoning, factuality QA does not admit a simple deterministic verifier. In math problems, the final answer can often be extracted and matched against the ground truth. In contrast, factual answers may appear in different surface forms, aliases, abbreviations, or paraphrases, making exact string matching unreliable. Therefore, we use Qwen3-32B as an automatic judge to evaluate sampled rollouts, consistent with~\citep{yao2025reasoning}. The judge is not asked to verify factual correctness from its own parametric knowledge. Instead, it is provided with the question, the ground-truth answer, and the model prediction, and is required to determine whether the prediction is semantically consistent with the provided ground truth. The detailed judging prompt is provided in Appendix~\ref{appendix:prompt_for_evaluation}.

Based on the judge result, we define the reward as:
\begin{equation}
r = r_{\mathrm{corr}} + r_{\mathrm{fmt}}.
\end{equation}
Here, $r_{\mathrm{corr}}$ measures factual correctness, while $r_{\mathrm{fmt}}$ provides a lightweight bonus for valid response formatting. The correctness reward is defined as:
\begin{equation}
r_{\mathrm{corr}} =
\begin{cases}
+1, & \text{if the response is judged correct},\\
-1, & \text{otherwise}.
\end{cases}
\end{equation}
A response is regarded as format-valid if it contains a properly closed \texttt{<think>} and \texttt{</think>}  blocks. We define the format reward as:
\begin{equation}
r_{\mathrm{fmt}} =
\begin{cases}
+0.05, & \text{if the response format is valid},\\
0, & \text{otherwise}.
\end{cases}
\end{equation}

Since GRPO computes relative advantages over both thinking and non-thinking trajectories for the same question, a hallucination-prone thinking trajectory receives lower advantage when the non-thinking reference achieves a higher reward. This makes the regularization effect emerge from group-wise comparison rather than from manually designed mode rewards.

%% file: 4exp.tex
\section{Experiments}

\subsection{Experimental Setting}\label{sec:experimental-setting}
\vspace{-1mm}
\paragraph{Models and Benchmarks.}
We evaluate \ours on two LRMs, Qwen3-4B and Qwen3-8B~\citep{yang2025qwen3}. For evaluation, we use six representative factuality-oriented benchmarks: SimpleQA~\citep{wei2024measuring}, SimpleQA-Verified~\citep{haas2025simpleqa} (denoted as SimpleQA-V in our paper), TriviaQA~\citep{joshi2017triviaqa}, NQ$\_$Open~\citep{kamalloo2023evaluating}, PopQA~\citep{mallen2023not}, and HotpotQA~\citep{yang2018hotpotqa}. We use Qwen3-32B as an automatic judge to determine whether the model prediction is semantically consistent with the ground-truth answer~\citep{yao2025reasoning}. Detailed descriptions and the judging prompt are provided in Appendix~\ref{appendix:prompt_for_evaluation} and Appendix~\ref{appendix:models-and-benchmarks}.

\vspace{-2.2mm}
\paragraph{Training Data Construction.} 
We construct model-specific training data from the TriviaQA training split. For each base model, we run both thinking and non-thinking modes on the training questions and retain examples where the two modes show a clear factuality gap, including cases where thinking performs much better and cases where non-thinking performs much better. This filtering is performed separately for Qwen3-4B and Qwen3-8B, since the effect of thinking is model-dependent. Details are provided in Appendix~\ref{appendix:data_construction}.

 \vspace{-2.2mm}
\paragraph{Baselines.}
We compare \ours with several baselines trained  under the same data setting. Fixed$_{{NoThink}}$ and Fixed$_{{Think}}$ denote the original model evaluated under the non-thinking and thinking modes, respectively. Adaptive$_{{CLS}}$ trains \texttt{microsoft/deberta-v3-base}~\citep{he2021debertav3} as a binary classifier to predict whether a given question should invoke thinking. We also include Adaptive$_{{SFT}}$, which uses the same data to teach the model when to think and when not to think via SFT. Adaptive$_{{RL}}$~\citep{zhang2025adaptthink} also allows the model to select between thinking and non-thinking modes via reinforcement learning. In addition, to isolate the effect of mixed-mode advantage regularization from reinforcement learning alone, we include two fixed-mode RL baselines. Fixed$_{{NoThink+RL}}$ fine-tunes the model with GRPO using the non-thinking format for all training examples, while Fixed$_{{Think+RL}}$ fine-tunes the model with GRPO using the thinking format for all training examples. All training-based baselines use the same curated training data as \ours. The detailed descriptions are provided in Appendix~\ref{appendix:baselines}.

\paragraph{Training and Evaluation Details.}
We train \ours using the \texttt{verl}~\citep{sheng2025hybridflow} framework. All experiments are conducted on 8 NVIDIA H200 GPUs, where 4 GPUs are used for policy training and the remaining 4 GPUs are used to serve the reward model. We train the model for one epoch with a training batch size of 64, a learning rate of $1\times10^{-6}$, and a maximum response length of 8192 tokens. 
For each prompt, we sample 8 rollouts, including 6 thinking rollouts and 2 non-thinking rollouts; thus, the thinking-ratio coefficient $\alpha$ is set to 0.75. We also apply KL regularization to constrain the updated policy from drifting too far from the reference model, with the KL loss coefficient set to 0.001. We evaluate all benchmarks using \texttt{vLLM}~\citep{kwon2023efficient}. 
For factuality evaluation, we use the same judge model, Qwen3-32B, and the same prompt strategy as in training. Following the official decoding recommendations, we set the temperature to 0.6, top-$p$ to 0.95, and top-$k$ to 20 for thinking mode generation. 
For non-thinking mode generation, we set the temperature to 0.7, top-$p$ to 0.8, and top-$k$ to 20. 
The maximum generation length is set to 8192 tokens.


\subsection{Experimental Results}

\begin{table}[t]
  \centering
    \resizebox{\textwidth}{!}{
    \begin{tabular}{cccccccc}
    \hline
    
    \hline
       Method & SimpleQA & SimpleQA-V & TriviaQA & NQ$\_$Open & PopQA & HotpotQA & Average \\
    \hline
     & \multicolumn{6}{c}{\textit{Qwen3-4B}} \\

     Fixed$_{NoThink}$ & 3.74\% & 3.80\% & 39.79\% & 29.50\% & 17.07\% & 27.86\% & 20.29\%  \\

       Fixed$_{Think}$ & 3.98\% & 4.10\% & 46.12\% & 32.94\% & 19.05\% & 30.64\% & 22.81\%  \\

       Adaptive$_{CLS}$ & 3.61\% & 4.30\% & 43.91\% & 30.33\% & 18.36\% & 28.43\% & 21.49\% \\
       
       Adaptive$_{SFT}$ & 3.35\% & 4.80\% & 41.58\% & 30.28\% & 17.69\% & 28.08\%  & 20.96\%  \\

       Adaptive$_{RL}$ & 4.23\% & 4.00\% & 47.22\% & 32.49\% & 19.22\% & 30.50\% & 22.94\% \\

       Fixed$_{NoThink+RL}$ & 4.62\%  & 5.20\% & 40.14\% & 30.61\% & 18.40\% & 27.70\% & 21.11\%  \\

      Fixed$_{Think + RL}$  & 4.48\% & 4.10\% & 47.44\% & 33.80\% & 19.16\% & 30.89\% & 23.31\% \\

      \ours & \textbf{6.73\%} & \textbf{10.00\%} & \textbf{49.12\%} & \textbf{34.76\%} & \textbf{20.48\%} & \textbf{31.82\%} & \textbf{25.49\%} \\
      \hline

    & \multicolumn{6}{c}{\textit{Qwen3-8B}} \\
      
     Fixed$_{NoThink}$ & 4.16\% & 4.10\% & 52.18\% & 38.64\%  & 22.13\% & 33.13\% & 25.72\% \\
        Fixed$_{Think}$  & 5.04\% & 4.60\% & 56.42\% & 39.94\% & 23.71\% &  35.84\% & 27.59\% \\

       Adaptive$_{CLS}$ & 4.62\% & 4.30\% & 54.94\% & 38.50\%  & 23.21\%  & 34.20\% &  26.63\% \\

     Adaptive$_{SFT}$ & 4.30\% & 4.60\% & 52.38\% & 38.50\% & 21.41\% & 33.88\% & 25.85\% \\

   Adaptive$_{RL}$ & 5.27\% & 5.50\% & 56.05\% & 40.08\% & 23.28\% & 35.64\% & 27.64\%  \\

      Fixed$_{NoThink+RL}$ & 4.23\% & 5.60\% & 53.80\% & 39.09\% & 23.22\% & 34.26\% & 26.70\% \\

       Fixed$_{Think + RL}$  & 5.29\% & 4.90\% & 56.88\% & 40.66\% & 23.85\% & 35.93\% & 27.92\%  \\

       \ours & \textbf{5.82\%}  & \textbf{9.00\%} & \textbf{58.03\%}  & \textbf{42.24\%} & \textbf{25.23\%} & \textbf{37.07\%} & \textbf{29.57\%} \\

    \hline

    \hline
    \end{tabular}}
    \caption{Main results on six factuality-oriented QA benchmarks using Qwen3-4B and Qwen3-8B. We compare \ours against fixed-mode inference, adaptive mode-selection baselines, supervised fine-tuning, and RL variants trained with the same data.}
    \vspace{-4mm}

\label{tab:qwen3-series}%
\end{table}%

\vspace{-1mm}
\paragraph{Main Results.}
Table~\ref{tab:qwen3-series} reports the results on six factuality-oriented QA benchmarks across Qwen3-4B and Qwen3-8B. Overall, \ours achieves the best average performance on both model scales. On Qwen3-4B, \ours improves the average accuracy from 22.81\% under the fixed thinking mode to 25.49\%, yielding a gain of 2.68\%. On Qwen3-8B, \ours improves the average accuracy from 27.59\% to 29.57\%, yielding a gain of 1.98\%. These results show that explicitly regularizing thinking behavior with non-thinking references improves factual reliability beyond simply enabling the thinking mode at inference time.

Compared with adaptive mode-selection baselines, \ours also shows clear advantages. Adaptive$_{{CLS}}$ and Adaptive$_{{SFT}}$ do not consistently outperform the fixed thinking mode, suggesting that a simple question-level mode-selection strategy may be insufficient in this setting. This is consistent with our motivation that the usefulness of thinking is model-specific and trajectory-dependent. In contrast, \ours avoids making a hard mode-selection decision and instead regularizes thinking through mixed-mode advantage estimation. We further compare \ours with RL baselines trained on the same data. Although Fixed$_{{Think+RL}}$ and Fixed$_{{NoThink+RL}}$ improve over some fixed-mode baselines, they remain consistently below \ours. For example, on Qwen3-4B, Fixed$_{{Think+RL}}$ achieves an average accuracy of 23.31\%, while \ours reaches 25.49\%. On Qwen3-8B, Fixed$_{{Think+RL}}$ reaches 27.92\%, compared with 29.57\% for \ours. This indicates that the improvement does not come from reinforcement learning alone, but from the mixed-mode advantage regularization that evaluates thinking trajectories against same-model non-thinking references.

Across individual benchmarks, \ours achieves the best performance on all six datasets for both Qwen3-4B and Qwen3-8B. The gains are especially pronounced on SimpleQA and SimpleQA-Verified, where factual precision is critical and unsupported thinking can easily introduce incorrect claims. These consistent improvements demonstrate that \ours effectively mitigates thinking-induced hallucination while preserving the useful factual benefits of explicit thinking.

\begin{figure}[!t]
    \centering
    \includegraphics[width=0.98\linewidth]{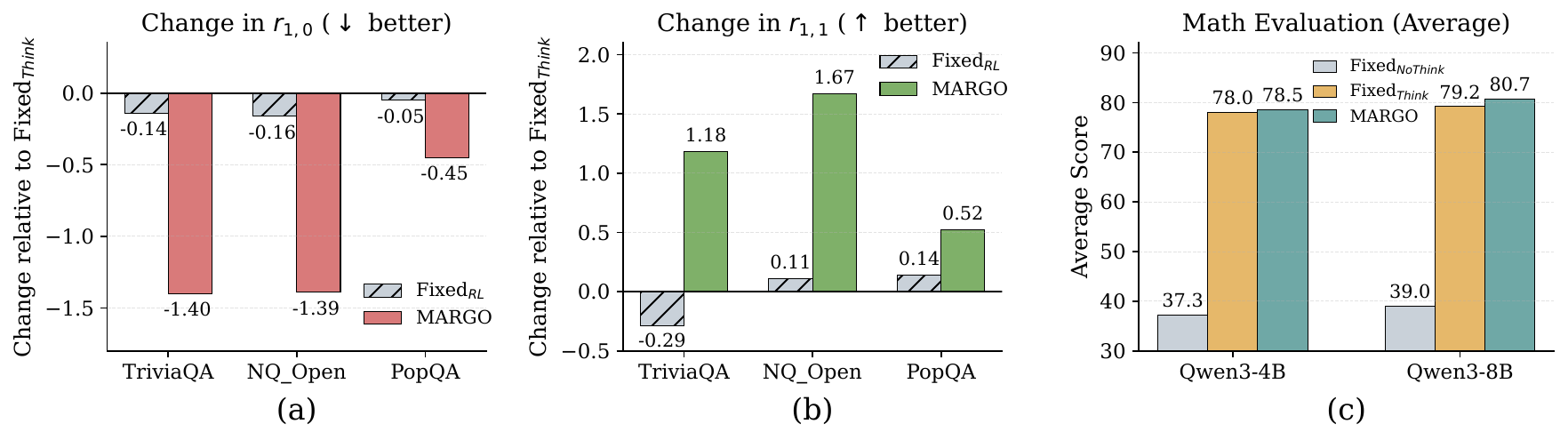}
    \vspace{-1mm}
    \caption{Analysis of transition-ratio changes and mathematical generalization. (a) Changes in $r_{1,0}$ on Qwen3-4B relative to Fixed$_{{Think}}$. (b) Changes in $r_{1,1}$ on Qwen3-4B relative to Fixed$_{{Think}}$. (c) Average mathematical performance over AMC23, AIME24, and AIME25 under Mean@16 for Qwen3-4B and Qwen3-8B, respectively.}
    \label{fig:transition_analysis}
    \vspace{-2mm}
\end{figure}

\paragraph{Transition Analysis.}
To further understand where the gains of \ours come from, we analyze how training changes the instance-level transition ratios on Qwen3-4B. We focus on $\tau_{1,0}$ and $\tau_{1,1}$, where $\tau_{1,0}$ captures cases in which thinking overturns an originally correct non-thinking answer, and $\tau_{1,1}$ captures cases that remain correct under thinking. Figure~\ref{fig:transition_analysis} reports the change of these two ratios relative to Fixed$_{{Think}}$. Fixed$_{{Think+RL}}$ produces only marginal changes in both $r_{1,0}$ and $r_{1,1}$, indicating that applying RL with the same data does not substantially alter the harmful transition pattern. 
In contrast, \ours consistently reduces $r_{1,0}$ across TriviaQA, NQ$\_$Open, and PopQA, while increasing $r_{1,1}$ on all three datasets. 
This shows that the improvement of \ours is not merely due to RL training or the curated data itself. 
Instead, the gains come from mixed-mode advantage regularization, where non-thinking rollouts serve as same-model references to suppress hallucination-prone thinking and preserve correct direct-answer tendencies. This also suggests that \ours does not simply bias the model toward non-thinking outputs, but instead changes the relative optimization signal according to whether thinking improves factual correctness for the same question.

\begin{figure}[htbp]
    \centering
    \includegraphics[width=0.99\linewidth]{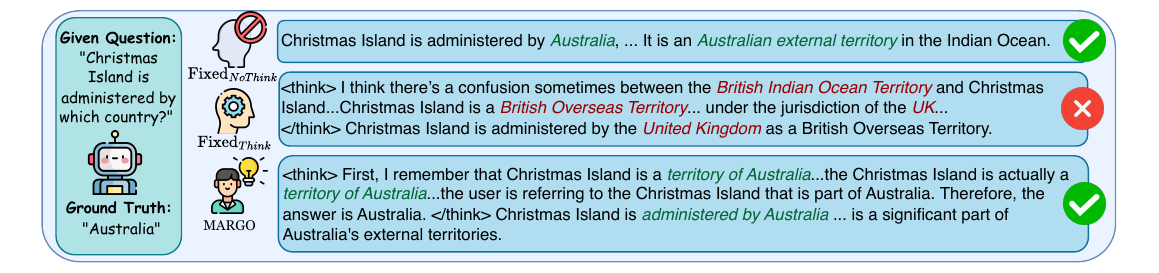}
    \caption{Qualitative case study comparing Fixed$_{{NoThink}}$, Fixed$_{{Think}}$, and \ours. 
    Fixed$_{{NoThink}}$ gives the correct direct answer, Fixed$_{{Think}}$ drifts to an incorrect answer through unsupported reasoning, and \ours recovers the correct answer by suppressing this factual drift.}
    \label{fig:case-study}
\end{figure}

\paragraph{Qualitative Analysis.}
We present a factuality QA case study to illustrate how \ours mitigates reasoning-induced hallucination. As shown in Figure~\ref{fig:case-study}, given the question ``Christmas Island is administered by which country?'', Fixed$_{{NoThink}}$ directly produces the correct answer, Australia, indicating that the model already contains the required factual knowledge in its direct-answer mode. However, when forced to generate an explicit reasoning trace, Fixed$_{{Think}}$ gradually drifts away from this correct knowledge. It introduces unsupported intermediate claims, confuses Christmas Island with British territories, and eventually predicts the United Kingdom as the administering country. This example demonstrates that additional reasoning can sometimes overwrite a correct direct answer with a hallucinated conclusion, rather than improving factual reliability. In contrast, \ours preserves the correct direct-answer tendency from the non-thinking rollout while still allowing a reasoning process. By using the non-thinking rollout as a same-model reference, \ours suppresses the erroneous factual drift in the thinking trajectory and recovers the correct answer.

\vspace{-1mm}

\subsection{Ablation Studies}
\paragraph{Mathematical Generalization.}
Although \ours is designed for mitigating hallucination in factuality-oriented QA, we further evaluate whether it preserves the general reasoning ability of LRMs on widely used mathematical benchmarks. Specifically, we compare the model trained with \ours against the original model under both non-thinking and thinking modes on three math benchmarks: AMC23, AIME24, and AIME25. For all three benchmarks, we report Mean@16 and Pass@16, where Mean@16 denotes the average accuracy over 16 sampled responses and Pass@16 denotes whether at least one of the 16 responses gives the correct answer. We set the maximum generation length to 32,768 for all math evaluations. As shown in Figure~\ref{fig:transition_analysis}(c), \ours preserves mathematical reasoning ability and even yields slight gains over the base model under the thinking mode. Specifically, \ours achieves the best Mean@16 across all three benchmarks at both the 4B and 8B scales, improving the average accuracy by 0.5\% on the 4B model and 1.5\% on the 8B model. The detailed results are provided in Table~\ref{tab:ablation-math}.

\begin{table}[!t]
  \centering
    \resizebox{\textwidth}{!}{
    \begin{tabular}{ccccccc}
    \hline
    
    \hline
       Method & SimpleQA & SimpleQA-V & TriviaQA & NQ$\_$Open & PopQA & HotpotQA \\
    \hline

     Fixed$_{NoThink}$ & 3.74\% & 3.80\% & 39.79\% & 29.50\% & 17.07\% & 27.86\%\\

       Fixed$_{Think}$ & 3.98\% & 4.10\% & 46.12\% & 32.94\% & 19.05\% & 30.64\% \\

    \ours + Random & 3.70\% & 4.50\% & 46.14\% & 32.88\% & 19.54\% & 30.34\%  \\
      
      \ours & \textbf{6.73\%} & \textbf{10.00\%} & \textbf{49.12\%} & \textbf{34.76\%} & \textbf{20.48\%} & \textbf{31.82\%} \\
      \hline

    \hline
    \end{tabular}}
    \caption{Data selection ablation on six factuality-oriented QA benchmarks using Qwen3-4B. The best results are highlighted in bold.}
    \vspace{-5mm}

\label{tab:ablation-data-4B}%
\end{table}%

\vspace{-2mm}
\paragraph{Data Selection Evaluation.}
Since \ours uses non-thinking rollouts as references to regularize thinking rollouts, the informativeness of the training prompts is important for effective optimization. Ideally, training examples should exhibit a clear factuality gap between thinking and non-thinking modes, so that the mixed rollout group can provide meaningful reference signals. To evaluate the effect of our data selection strategy, we conduct an ablation study on Qwen3-4B. We compare \ours trained on our selected training set with \ours trained on a random subset of 5,660 examples sampled from the same TriviaQA training pool. All training and evaluation settings are kept identical. As shown in Table~\ref{tab:ablation-data-4B}, random data selection yields limited improvement over the fixed thinking baseline, while our selected data leads to consistent gains across all benchmarks. This suggests that examples with clear thinking/non-thinking factuality gaps are more effective for mixed-mode advantage regularization.

\vspace{-2mm}

%% file: 5con.tex
\section{Conclusion}
\label{sec:conclusion}

In this paper, we studied factuality-oriented QA in large reasoning models (LRMs) and identified \emph{thinking-induced hallucination}, where explicit thinking can overturn correct non-thinking answers and lead to factual drift. We interpreted explicit thinking as a reasoning residual over the model's direct-answer behavior, which may either recover missing knowledge or introduce unsupported associations. Based on this view, we proposed \ours, a mixed-mode reinforcement learning framework that uses non-thinking rollouts as same-model references to regularize hallucination-prone thinking. Experiments across multiple factuality-oriented QA benchmarks show that \ours improves factual reliability over strong baselines, while preserving general reasoning ability on mathematical benchmarks. Overall, our findings suggest that improving LRMs requires not only encouraging more thinking, but also learning when explicit thinking provides factual value over direct answering.

%% file: appendix.tex
\section*{Limitations}
\label{sec:limitation}

This work focuses on improving the factual reliability of LRMs in factuality-oriented QA. While our experiments cover multiple benchmarks and model scales, future work can further explore the applicability of mixed-mode advantage regularization to broader scenarios, such as open-ended generation, multi-turn interaction, retrieval-augmented systems, and multimodal reasoning.

\section*{Broader Impact}\label{sec:broader-impact}
By reducing cases where explicit thinking turns correct direct answers into hallucinated predictions, \ours can help improve the reliability of LRMs in information-seeking and educational applications. At the same time, the method is intended to complement, rather than replace, external verification mechanisms in high-stakes domains. We hope this work encourages future research on more reliable and mode-aware optimization for LRMs.

\appendix

\section{Advantage Decomposition of Mixed-Mode Regularization}
\label{appendix:theory}

In this section, we provide a formal derivation showing why mixed-mode rollout groups introduce a residual-value adjustment that is absent in standard all-thinking GRPO. 
This analysis characterizes the advantage signal induced by group normalization, rather than providing a convergence guarantee.

\paragraph{Setup.}
For a factual question $x$, let $\mathrm{T}$ and $\mathrm{N}$ denote the thinking and non-thinking modes, respectively. 
A response sampled under mode $m\in\{\mathrm{T},\mathrm{N}\}$ is denoted by $y^m\sim \pi_\theta(\cdot\mid x,m)$, and its factual reward is $R(x,y^m)$. 
We define the expected reward of each mode as
\begin{equation}
\mu_m(x)
=
\mathbb{E}_{y^m\sim \pi_\theta(\cdot\mid x,m)}
\left[
R(x,y^m)
\right],
\qquad m\in\{\mathrm{T},\mathrm{N}\}.
\end{equation}
The residual value of explicit thinking is defined as
\begin{equation}
\Delta_{\mathrm{res}}(x)
=
\mu_{\mathrm{T}}(x)-\mu_{\mathrm{N}}(x).
\end{equation}
A positive $\Delta_{\mathrm{res}}(x)$ indicates that thinking provides higher expected factual reward than direct answering, while a negative value indicates that thinking harms factuality relative to the non-thinking reference.

\paragraph{All-Thinking GRPO.}
Consider a standard all-thinking rollout group where all samples are generated in the thinking mode. 
Ignoring the standard-deviation normalization for clarity, the group-relative advantage of a thinking response $y^{\mathrm{T}}$ is centered around the all-thinking reward baseline:
\begin{equation}
A_{\mathrm{T}}(x,y^{\mathrm{T}})
\propto
R(x,y^{\mathrm{T}})-\mu_{\mathrm{T}}(x).
\end{equation}
This advantage compares $y^{\mathrm{T}}$ only against other thinking-mode samples. 
Therefore, it can rank thinking trajectories within the thinking distribution, but it does not reveal whether thinking itself improves factuality over the model's direct non-thinking behavior.

\paragraph{Mixed-Mode GRPO.}
Now consider a mixed rollout group containing thinking and non-thinking trajectories with proportions $\alpha$ and $1-\alpha$, respectively. 
The expected group reward baseline is
\begin{equation}
\mu_{\mathrm{mix}}(x)
=
\alpha\mu_{\mathrm{T}}(x)
+
(1-\alpha)\mu_{\mathrm{N}}(x).
\end{equation}

\begin{proposition}[Residual-value decomposition of mixed-mode advantage]
\label{prop:residual_value_decomposition}
Ignoring the standard-deviation normalization and replacing the empirical group mean with its expectation, the mixed-mode advantage of a thinking trajectory can be decomposed as
\begin{equation}
A_{\mathrm{mix}}^{\mathrm{T}}(x,y^{\mathrm{T}})
\propto
\left(R(x,y^{\mathrm{T}})-\mu_{\mathrm{T}}(x)\right)
+
(1-\alpha)\Delta_{\mathrm{res}}(x).
\end{equation}
The first term is the within-thinking advantage, and the second term is the residual-value adjustment. 
Similarly, the mixed-mode advantage of a non-thinking trajectory can be decomposed as
\begin{equation}
A_{\mathrm{mix}}^{\mathrm{N}}(x,y^{\mathrm{N}})
\propto
\left(R(x,y^{\mathrm{N}})-\mu_{\mathrm{N}}(x)\right)
-
\alpha\Delta_{\mathrm{res}}(x).
\end{equation}
Here, the first term is the within-non-thinking advantage, and the second term is the opposite residual-value adjustment.
\end{proposition}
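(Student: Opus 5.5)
The plan is a direct algebraic computation under the two simplifications named in Proposition~\ref{prop:residual_value_decomposition}. First, we drop the standard-deviation factor, so the advantage of any trajectory in the group is, up to a positive constant, its reward minus the group baseline. Second, we replace the empirical group mean by its expectation. With $K_T$ thinking and $K_N$ non-thinking rollouts and $\alpha = K_T/(K_T+K_N)$, linearity of expectation gives the baseline
\[
\mathbb{E}[\mathrm{mean}] = \alpha\mu_{\mathrm{T}}(x) + (1-\alpha)\mu_{\mathrm{N}}(x) = \mu_{\mathrm{mix}}(x).
\]
Hence $A_{\mathrm{mix}}^{m}(x,y^{m}) \propto R(x,y^{m}) - \mu_{\mathrm{mix}}(x)$ for $m\in\{\mathrm{T},\mathrm{N}\}$.

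For a thinking trajectory, we add and subtract $\mu_{\mathrm{T}}(x)$:
\[
R(x,y^{\mathrm{T}})-\mu_{\mathrm{mix}}(x) = \bigl(R(x,y^{\mathrm{T}})-\mu_{\mathrm{T}}(x)\bigr) + \mu_{\mathrm{T}}(x) - \alpha\mu_{\mathrm{T}}(x) - (1-\alpha)\mu_{\mathrm{N}}(x).
\]
The last three terms collapse to $(1-\alpha)\bigl(\mu_{\mathrm{T}}(x)-\mu_{\mathrm{N}}(x)\bigr) = (1-\alpha)\Delta_{\mathrm{res}}(x)$, which is the first claimed decomposition.

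For a non-thinking trajectory, we add and subtract $\mu_{\mathrm{N}}(x)$ instead. The remainder is
\[
\mu_{\mathrm{N}}(x) - \alpha\mu_{\mathrm{T}}(x) - (1-\alpha)\mu_{\mathrm{N}}(x) = -\alpha\bigl(\mu_{\mathrm{T}}(x)-\mu_{\mathrm{N}}(x)\bigr) = -\alpha\Delta_{\mathrm{res}}(x),
\]
which gives the second decomposition. As a consistency check, the expected advantages satisfy $\alpha\cdot(1-\alpha)\Delta_{\mathrm{res}}(x) + (1-\alpha)\cdot(-\alpha\Delta_{\mathrm{res}}(x)) = 0$, as a centered advantage must.

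The only delicate point is justifying the ``replace the empirical mean by its expectation'' step. The empirical group mean includes the trajectory $y_i$ itself, so strictly the advantage is $R_i$ minus a mean that depends on $R_i$. I would state explicitly that the proposition is understood at the level of expected baselines, as in the hypothesis. Optionally, I would remark that a leave-one-out or large-group limit makes this exact up to an $O(1/|\mathcal{G}(x)|)$ term. The proportionality constant (the dropped standard deviation) is positive and shared by all members of the group, so it does not affect signs or the decomposition.
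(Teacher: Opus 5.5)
Your proposal is correct and follows essentially the same route as the paper's proof: substitute the expected mixed baseline $\mu_{\mathrm{mix}}(x)=\alpha\mu_{\mathrm{T}}(x)+(1-\alpha)\mu_{\mathrm{N}}(x)$ and regroup by adding and subtracting the same-mode mean, yielding $(1-\alpha)\Delta_{\mathrm{res}}(x)$ and $-\alpha\Delta_{\mathrm{res}}(x)$. Your remarks on the zero-sum consistency check and on the self-inclusion of $y_i$ in the empirical mean (an $O(1/|\mathcal{G}(x)|)$ discrepancy) go slightly beyond what the paper states, and they are accurate.
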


\begin{proof}
For a thinking trajectory $y^{\mathrm{T}}$, the mixed-mode group-relative advantage is centered by the mixed reward baseline:
\begin{equation}
A_{\mathrm{mix}}^{\mathrm{T}}(x,y^{\mathrm{T}})
\propto
R(x,y^{\mathrm{T}})-\mu_{\mathrm{mix}}(x).
\end{equation}
Substituting the definition of $\mu_{\mathrm{mix}}(x)$ gives
\begin{equation}
\begin{aligned}
A_{\mathrm{mix}}^{\mathrm{T}}(x,y^{\mathrm{T}})
&\propto
R(x,y^{\mathrm{T}})
-
\alpha\mu_{\mathrm{T}}(x)
-
(1-\alpha)\mu_{\mathrm{N}}(x) \\
&=
R(x,y^{\mathrm{T}})
-
\mu_{\mathrm{T}}(x)
+
(1-\alpha)\mu_{\mathrm{T}}(x)
-
(1-\alpha)\mu_{\mathrm{N}}(x) \\
&=
R(x,y^{\mathrm{T}})
-
\mu_{\mathrm{T}}(x)
+
(1-\alpha)
\left(
\mu_{\mathrm{T}}(x)-\mu_{\mathrm{N}}(x)
\right) \\
&=
\left(R(x,y^{\mathrm{T}})-\mu_{\mathrm{T}}(x)\right)
+
(1-\alpha)\Delta_{\mathrm{res}}(x).
\end{aligned}
\end{equation}
This proves the decomposition for thinking trajectories.

For a non-thinking trajectory $y^{\mathrm{N}}$, we similarly have
\begin{equation}
A_{\mathrm{mix}}^{\mathrm{N}}(x,y^{\mathrm{N}})
\propto
R(x,y^{\mathrm{N}})-\mu_{\mathrm{mix}}(x).
\end{equation}
Substituting the definition of $\mu_{\mathrm{mix}}(x)$ gives
\begin{equation}
\begin{aligned}
A_{\mathrm{mix}}^{\mathrm{N}}(x,y^{\mathrm{N}})
&\propto
R(x,y^{\mathrm{N}})
-
\alpha\mu_{\mathrm{T}}(x)
-
(1-\alpha)\mu_{\mathrm{N}}(x) \\
&=
R(x,y^{\mathrm{N}})
-
\mu_{\mathrm{N}}(x)
-
\alpha\mu_{\mathrm{T}}(x)
+
\alpha\mu_{\mathrm{N}}(x) \\
&=
R(x,y^{\mathrm{N}})
-
\mu_{\mathrm{N}}(x)
-
\alpha
\left(
\mu_{\mathrm{T}}(x)-\mu_{\mathrm{N}}(x)
\right) \\
&=
\left(R(x,y^{\mathrm{N}})-\mu_{\mathrm{N}}(x)\right)
-
\alpha\Delta_{\mathrm{res}}(x).
\end{aligned}
\end{equation}
This completes the proof.
\end{proof}

\paragraph{Implication.}
Proposition~\ref{prop:residual_value_decomposition} shows that mixed-mode advantage estimation introduces a residual-value adjustment that is absent from all-thinking GRPO. 
For thinking trajectories, this adjustment is proportional to $\Delta_{\mathrm{res}}(x)$. 
When $\Delta_{\mathrm{res}}(x)>0$, thinking has higher expected factual reward than non-thinking, and the mixed-mode advantage increases the relative preference for thinking trajectories. 
When $\Delta_{\mathrm{res}}(x)<0$, thinking harms factuality relative to direct answering, and the mixed-mode advantage reduces the relative preference for thinking trajectories. 
Thus, mixed-mode GRPO does not merely reward correct final answers; it adjusts the optimization signal according to whether explicit thinking provides factual value beyond the model's direct-answer behavior.

\paragraph{Relation to \ours.}
\ours instantiates this principle by constructing each GRPO rollout group with both thinking and non-thinking trajectories for the same question. 
The reward function itself does not explicitly prefer either mode, nor does it impose a handcrafted penalty on thinking length. 
Instead, the regularization effect emerges from group-wise advantage normalization: non-thinking trajectories serve as same-model references that expose whether the reasoning residual introduced by thinking is beneficial or harmful for factuality.

\section{Prompts}\label{appendix:prompt_for_evaluation}
We present the prompt used for factuality evaluation. Specifically, the judge model takes as input the question, the ground-truth answer (denoted as the gold target in the prompt), and the model prediction (denoted as the predicted answer in the prompt). Based on the consistency between the prediction and the ground truth, the judge model outputs one of three labels: ``A'', ``B'', or ``C'', representing correct, incorrect, and not attempted, respectively. For evaluation, we count only ``A'' as a correct prediction, while both ``B'' and ``C'' are treated as incorrect.

Importantly, this is different from using an LLM as an open-ended factual verifier. The judge is not asked to determine factual truth based on its own parametric knowledge. Instead, it is explicitly provided with the question, the ground-truth answer, and the model prediction, and is only required to compare whether the prediction matches the provided ground truth. This ground-truth-conditioned evaluation reduces reliance on the judge model's internal knowledge and focuses the evaluation on semantic correctness rather than surface-form overlap. 

\begin{tcolorbox}[
    enhanced,
    breakable,
    colback=green!2!white,
    colframe=green!45!black,
    title=\textbf{Prompt for Factuality Evaluation},
    coltitle=white,
    fonttitle=\bfseries\sffamily,
    arc=3mm,
    boxrule=0.8pt,
    left=6pt, right=6pt, top=4pt, bottom=4pt
]
\begin{Verbatim}[fontsize=\tiny, breaklines=true, breakanywhere=true]
Your job is to look at a question, a gold target, and a predicted answer, and then assign a grade of either ["CORRECT", "INCORRECT", "NOT_ATTEMPTED"].
First, I will give examples of each grade, and then you will grade a new example.

The following are examples of CORRECT predicted answers.
```
Question: What are the names of Barack Obama's children?
Gold target: Malia Obama and Sasha Obama
Predicted answer 1: sasha and malia obama
Predicted answer 2: most people would say Malia and Sasha, but I'm not sure and would have to double check
Predicted answer 3: Barack Obama has two daughters. Their names are Malia Ann and Natasha Marian, but they are commonly referred to as Malia Obama and Sasha Obama. Malia was born on July 4, 1998, and Sasha was born on June 10, 2001.
```
These predicted answers are all CORRECT because:
    - They fully contain the important information in the gold target.
    - They do not contain any information that contradicts the gold target.
    - Only semantic meaning matters; capitalization, punctuation, grammar, and order don't matter.
    - Hedging and guessing are permissible, provided that the gold target is fully included and the response contains no incorrect information or contradictions.

The following are examples of INCORRECT predicted answers.
```
Question: What are the names of Barack Obama's children?
Gold target: Malia and Sasha
Predicted answer 1: Malia.
Predicted answer 2: Malia, Sasha, and Susan.
Predicted answer 3: Barack Obama does not have any children.
Predicted answer 4: I think it's either Malia and Sasha. Or it could be Malia and Jackie. Or it could be Joey and Malia.
Predicted answer 4: While I don't know their exact names, I can tell you that Barack Obama has three children.
Predicted answer 5: It's possible you may mean Betsy and Olivia. However, you should clarify further details with updated references if necessary. Is that the correct answer?
Predicted answer 6: It may be the case that Obama's child is named James. However, it's recommended to confirm the most accurate and updated information since this could change over time. This model may not always reflect the most current information.
```
These predicted answers are all INCORRECT because:
    - A factual statement in the answer contradicts the gold target. Incorrect statements that have some hedging (e.g., "it is possible that", "although i'm not sure, i think") are also considered incorrect.

The following are examples of NOT_ATTEMPTED predicted answers.
```
Question: What are the names of Barack Obama's children?
Gold target: Malia and Sasha
Predicted answer 1: I don't know.
Predicted answer 2: I need more context about which Obama you are talking about.
Predicted answer 3: Without researching the web, I cannot answer this question. However, I can tell you that Barack Obama has two children.
Predicted answer 4: Barack Obama has two children. I know that one of them is Malia, but I'm not sure about the other one.
```
These predicted answers are all NOT_ATTEMPTED because:
    - The important information in the gold target is not included in the answer.
    - No statements in the answer contradict the gold target.

Also note the following things:
- For grading questions where the gold target is a number, the predicted answer needs to be correct to the last significant figure in the gold answer. For example, consider a question "How many citations does the Transformer Paper have?" with gold target "120k". 
    - Predicted answers "120k", "124k", and 115k" are all CORRECT. 
    - Predicted answers "100k" and "113k" are INCORRECT. 
    - Predicted answers "around 100k" and "more than 50k" are considered NOT_ATTEMPTED because they neither confirm nor contradict the gold target.
- The gold target may contain more information than the question. In such cases, the predicted answer only needs to contain the information that is in the question.
    - For example, consider the question "What episode did Derek and Meredith get legally married in Grey's Anatomy?" with gold target "Season 7, Episode 20: White Wedding". Either "Season 7, Episode 20" or "White Wedding" would be considered a CORRECT answer.
- Do not punish predicted answers if they omit information that would be clearly inferred from the question.
    - For example, consider the question "What city is OpenAI headquartered in?" and the gold target "San Francisco, California". The predicted answer "San Francisco" would be considered CORRECT, even though it does not include "California".
    - Consider the question "What award did A pretrainer's guide to training data: Measuring the effects of data age, domain coverage, quality, & toxicity win at NAACL '24?", the gold target is "Outstanding Paper Award". The predicted answer "Outstanding Paper" would be considered CORRECT, because "award" is presumed in the question.
    - For the question "What is the height of Jason Wei in meters?", the gold target is "1.73 m". The predicted answer "1.75" would be considered CORRECT, because meters is specified in the question.
    - For the question "What is the name of Barack Obama's wife?", the gold target is "Michelle Obama". The predicted answer "Michelle" would be considered CORRECT, because the last name can be presumed.
- Do not punish for typos in people's name if it's clearly the same name. 
    - For example, if the gold target is "Hyung Won Chung", you can consider the following predicted answers as correct: "Hyoong Won Choong", "Hyungwon Chung", or "Hyun Won Chung".

Here is a new example. Simply reply with either CORRECT, INCORRECT, NOT ATTEMPTED. Don't apologize or correct yourself if there was a mistake; we are just trying to grade the answer.
```
Question: {question}
Gold target: {target}
Predicted answer: {predicted_answer}
```

Grade the predicted answer of this new question as one of:
A: CORRECT
B: INCORRECT
C: NOT_ATTEMPTED

Just return the letters "A", "B", or "C", with no text around it.
\end{Verbatim}
\end{tcolorbox}

\section{Additional Experimental Details}
\label{appendix:additional_experimental_details}

\subsection{Training Data Construction}
\label{appendix:data_construction}

We construct our training data from the TriviaQA training split. The original TriviaQA training split contains 138,384 question-answer pairs, but many questions are duplicated. After removing duplicated questions, we obtain 76,523 unique training examples.

For each base model, i.e., Qwen3-4B and Qwen3-8B, we independently construct a model-specific training set. Specifically, for each question $x$, we sample $N$ responses under the thinking mode and $N$ responses under the non-thinking mode. We then use Qwen3-32B as the automatic judge to determine whether each sampled response is correct with respect to the ground-truth answer. In our setting, $N$ is set to 6 by default.

Let $c^{\mathrm{T}}_k(x)\in\{0,1\}$ and $c^{\mathrm{N}}_k(x)\in\{0,1\}$ denote the correctness of the $k$-th sampled response under the thinking and non-thinking modes, respectively. We compute the empirical correctness ratios of the two modes as:
\begin{equation}
S_{\mathrm{T}}(x)
=
\frac{1}{N}
\sum_{k=1}^{N}
c^{\mathrm{T}}_k(x),
\qquad
S_{\mathrm{N}}(x)
=
\frac{1}{N}
\sum_{k=1}^{N}
c^{\mathrm{N}}_k(x).
\end{equation}
Here, $S_{\mathrm{T}}(x)$ and $S_{\mathrm{N}}(x)$ estimate the model's factual correctness under the thinking and non-thinking modes, respectively.

We further compute the average mode score:
\begin{equation}
b(x)
=
\frac{S_{\mathrm{T}}(x)+S_{\mathrm{N}}(x)}{2},
\end{equation}
and define the mode-specific relative scores as:
\begin{equation}
A_{\mathrm{T}}(x)
=
S_{\mathrm{T}}(x)-b(x),
\qquad
A_{\mathrm{N}}(x)
=
S_{\mathrm{N}}(x)-b(x).
\end{equation}
Equivalently, the factuality gap between the two modes is:
\begin{equation}
\Delta(x)
=
S_{\mathrm{T}}(x)-S_{\mathrm{N}}(x).
\end{equation}

We retain examples where the two modes show a clear factuality gap. 
For Qwen3-4B, an example is selected as a non-thinking-favored example if:
\begin{equation}
\Delta(x) \leq -0.7
\quad \text{and} \quad
S_{\mathrm{N}}(x) \geq 0.7.
\end{equation}
This condition indicates that the model is substantially more accurate under the non-thinking mode than under the thinking mode. 
Similarly, an example is selected as a thinking-favored example if:
\begin{equation}
\Delta(x) \geq 1.0
\quad \text{and} \quad
S_{\mathrm{T}}(x) \geq 1.0.
\end{equation}
This condition selects questions where the model consistently answers correctly under the thinking mode while performing worse under the non-thinking mode.

For Qwen3-8B, we use a slightly relaxed threshold, since the relative gap between thinking and non-thinking modes becomes less extreme as the base model becomes stronger. 
Specifically, we select non-thinking-favored examples using:
\begin{equation}
\Delta(x) \leq -0.5
\quad \text{and} \quad
S_{\mathrm{N}}(x) \geq 0.5,
\end{equation}
and thinking-favored examples using:
\begin{equation}
\Delta(x) \geq 0.7
\quad \text{and} \quad
S_{\mathrm{T}}(x) \geq 0.7.
\end{equation}

The final training set is the union of the selected examples from both directions. 
This construction focuses training on questions where thinking and non-thinking exhibit clear mode-dependent factual differences, which provide informative signals for mixed-mode GRPO. 
All filtering is performed only on the TriviaQA training split, and no information from any test benchmark is used. 
This procedure yields 5,660 training examples for Qwen3-4B and 5,721 training examples for Qwen3-8B.

Although the resulting training sets are relatively small, they are sufficient for our setting because we start from existing reasoning models rather than training reasoning behavior from scratch. 
Our goal is not to teach the model new factual knowledge or build reasoning ability from a base model, but to regularize when explicit thinking is helpful or harmful relative to the model's own non-thinking behavior. 
Therefore, a compact set of examples with clear mode-dependent factual differences provides targeted supervision for our mixed-mode optimization.

\subsection{Models and Benchmarks.}\label{appendix:models-and-benchmarks}
We evaluate \ours on two LRMs, Qwen3-4B and Qwen3-8B~\citep{yang2025qwen3}, both of which exhibit strong reasoning ability across diverse tasks. For evaluation, we use six representative factuality-oriented benchmarks: SimpleQA~\citep{wei2024measuring}, SimpleQA-Verified~\citep{haas2025simpleqa} (denoted as SimpleQA-V in our paper), TriviaQA~\citep{joshi2017triviaqa}, NQ$\_$Open~\citep{kamalloo2023evaluating}, PopQA~\citep{mallen2023not}, and HotpotQA~\citep{yang2018hotpotqa}. These datasets cover a broad range of factual knowledge, from short-form fact retrieval to multi-hop open-domain question answering, allowing us to evaluate factual reliability across different types of factuality tasks. Unlike mathematics or multiple-choice tasks, factuality QA cannot always be reliably evaluated by exact string matching. The same correct answer may appear in different surface forms, aliases, abbreviations, or paraphrases. Therefore, we use Qwen3-32B as an automatic judge to determine whether the model prediction is semantically consistent with the ground-truth answer.

\subsection{Baselines}
\label{appendix:baselines}

We compare \ours with several baselines trained or evaluated under the same data setting. Fixed$_{{NoThink}}$ and Fixed$_{{Think}}$ denote the original model evaluated under the non-thinking and thinking modes, respectively, measuring the default behavior of LRMs under fixed generation modes.

Adaptive$_{{CLS}}$ trains \texttt{microsoft/deberta-v3-base}~\citep{he2021debertav3} as a binary classifier to predict whether a given question should invoke thinking. This baseline examines whether the decision of when to think can be learned as a question-level classification problem.

We also include Adaptive$_{{SFT}}$, which uses the same curated data to teach the model when to think and when not to think through supervised fine-tuning. For examples where thinking is preferred, we construct the SFT target as the model-generated thinking process followed by the final answer. For examples where non-thinking is preferred, we use an empty thinking block followed by the final answer. All constructed targets follow the official response format, and all target answers are judged correct.

We further compare with Adaptive$_{{RL}}$~\citep{zhang2025adaptthink}, an adaptive thinking baseline originally designed to improve inference efficiency for mathematical reasoning. In our setting, we adapt it to factuality QA by training the model to decide when to skip thinking when thinking is unnecessary or potentially harmful.

In addition, to isolate the effect of mixed-mode advantage regularization from reinforcement learning alone, we include two fixed-mode RL baselines. Fixed$_{{NoThink+RL}}$ fine-tunes the model with GRPO using the non-thinking format for all training examples, while Fixed$_{{Think+RL}}$ fine-tunes the model with GRPO using the thinking format for all training examples. All training-based baselines use the same curated training data as \ours.

\subsection{Detailed Results}

We provide the detailed results for Figure~\ref{fig:transition_analysis}(c) in Table~\ref{tab:ablation-math}. Specifically, we report Mean@16 and Pass@16 on AMC23, AIME24, and AIME25 for both Qwen3-4B and Qwen3-8B. Compared with Fixed$_{NoThink}$ and Fixed$_{Think}$, \ours achieves the best Mean@16 on all three benchmarks across both model scales. These results show that \ours improves factual reliability while preserving the general reasoning ability.

\subsection{Generation Length Statistics}
\label{app:generation_length}

Table~\ref{tab:generation-length-analysis} reports the average generation length of Fixed$_{{Think}}$ and \ours on TriviaQA, NQ$\_$Open, and PopQA. The length includes both the thinking trace and the final response. On Qwen3-4B, \ours produces slightly shorter generations than Fixed$_{{Think}}$, with relative changes ranging from $-5.24\%$ to $-3.42\%$. On Qwen3-8B, however, \ours produces slightly longer generations, with relative changes ranging from $+0.09\%$ to $+1.48\%$. These results show that \ours does not impose a consistent shortening effect on the generated outputs, which is consistent with its design of regularizing reasoning through mixed-mode advantage estimation rather than an explicit length penalty.

\begin{table}[t]
    \centering
    \resizebox{\textwidth}{!}{
    \begin{tabular}{ccccccccc}
    \hline

    \hline
     \multirow{2}{*}{Method} &  \multicolumn{2}{c}{AMC23} & \multicolumn{2}{c}{AIME24} & \multicolumn{2}{c}{AIME25} & \multirow{2}{*}{Average} \\ 
   \cline{2-7}
     & Mean@16 & Pass@16 &  Mean@16 & Pass@16 &  Mean@16 & Pass@16  \\
    \hline
     & \multicolumn{6}{c}{\textit{Qwen3-4B}} \\

    Fixed$_{NoThink}$ & 67.0 & 92.5 & 23.3 & 53.3 & 21.7 & 46.7 & 37.3  \\
    Fixed$_{Think}$ & 95.8 & 100.0 & 73.8 & 83.3 & 64.4 & 86.7  & 78.0 \\
    \ours & \textbf{96.2} & 100.0 & \textbf{73.8} & 90.0 & \textbf{65.4} & 90.0 &  \textbf{78.5}  \\
    \hline

    & \multicolumn{6}{c}{\textit{Qwen3-8B}} \\
    Fixed$_{NoThink}$ & 66.1 & 92.5 & 29.0 & 66.7 & 21.9 & 43.3 & 39.0 \\
    Fixed$_{Think}$ & 94.2 & 100.0 & 76.0 & 86.7 & 67.3 & 86.7 &   79.2 \\
    \ours & \textbf{94.5} & 100.0 & \textbf{77.5} & 83.3 & \textbf{70.2} & 83.3 &  \textbf{80.7} \\
    \hline 

    \hline
    
    \end{tabular}}
    \caption{Additional evaluation of \ours on mathematical benchmarks. We report Mean@16 and Pass@16 on AMC23, AIME24, and AIME25 for Qwen3-4B and Qwen3-8B.}
    \label{tab:ablation-math}
\end{table}

\begin{table}[t]
\centering
\small
\begin{tabular}{llccc}
\hline

\hline
Model & Method & TriviaQA & NQ$\_$Open & PopQA \\
\hline
\multirow{3}{*}{Qwen3-4B}
& Fixed$_{{Think}}$ & 1128.45 & 1192.60 & 811.13 \\
& \ours & 1074.61 & 1151.78 & 768.60 \\
& Relative Change & -4.77\% & -3.42\% & -5.24\% \\
\hline
\multirow{3}{*}{Qwen3-8B}
& Fixed$_{{Think}}$ & 1089.75 & 1268.82 & 868.98 \\
& \ours & 1103.45 & 1287.65 & 869.79 \\
& Relative Change & +1.26\% & +1.48\% & +0.09\% \\
\hline

\hline
\end{tabular}
\caption{Average generation length on factuality-oriented QA benchmarks. 
The length includes both the thinking trace and the final response. 
Relative change is computed with respect to Fixed$_{{Think}}$, where positive values indicate longer generations and negative values indicate shorter generations. 
The results show no consistent shortening trend across model scales.}
\label{tab:generation-length-analysis}
\end{table}


\section{Additional Transition Analysis}
\label{app:additional-transition}

Beyond the TriviaQA analysis in Section~\ref{sec:motivation}, we further examine the transition behavior on additional benchmarks, including NQ$\_$Open and PopQA. As shown in Tables~\ref{tab:nq-open-tran} and~\ref{tab:popqa-tran}, thinking-induced hallucination also consistently appears on these factuality-oriented QA benchmarks. On NQ$\_$Open, the effect is particularly pronounced. For the 8B, 14B, and 32B models, $r_{1,0}$ reaches 8.12\%, 7.89\%, and 7.62\%, respectively, which is close to the corresponding $r_{0,1}$ values of 9.42\%, 8.95\%, and 9.61\%. This indicates that, on NQ$\_$Open, the fraction of examples harmed by thinking is comparable to the fraction of examples corrected by thinking. A similar trend also appears on PopQA, where $r_{1,0}$ remains around 4\%--5\% for most model scales, showing that correctness-to-incorrectness transitions are not limited to a single factuality benchmark.

We further compare this trend with GSM8K~\citep{cobbe2021training}, a mathematical reasoning benchmark. As shown in Table~\ref{tab:gsm8k-tran}, the $r_{1,0}$ ratios on GSM8K remain below 2\% across all model scales, with values ranging from 0.88\% to 1.84\%. Meanwhile, the $r_{1,1}$ ratios are consistently high, reaching over 86\% for models from 4B to 32B. This suggests that, unlike factuality-oriented QA, explicit thinking in mathematical reasoning is much less likely to overturn an originally correct prediction. This contrast aligns with the nature of the tasks: factuality QA often requires retrieving specific knowledge, where additional thinking may introduce unsupported associations or entity confusion; in mathematical reasoning, intermediate steps are more likely to be grounded in the problem conditions and help derive the final answer.

Overall, these additional analyses further support the key observation of our paper: thinking-induced hallucination is a recurring issue in factuality-oriented QA, rather than an artifact of a single benchmark. They also show that the risk of harmful thinking is task-dependent, being more prominent in knowledge-intensive factual QA than in mathematical reasoning.

\begin{table}[h]
\centering
\begin{tabular}{ccccc}
\hline

\hline
Size & $r_{0,0}$ & $r_{0,1}$ & $r_{1,0}$ & $r_{1,1}$ \\
\hline
1.7B & 71.39\% & 8.81\% & 4.71\% & 15.10\% \\
4B   & 61.55\% & 8.95\% & 5.51\% & 23.99\%  \\
8B   & 51.94\% & 9.42\% & 8.12\% & 30.53\%  \\
14B & 46.12\% & 8.95\% & 7.89\% & 37.04\%  \\
32B  & 43.35\% & 9.61\% & 7.62\% & 39.42\%  \\
\hline

\hline
\end{tabular}
   \caption{Transition ratios between non-thinking and thinking modes on NQ$\_$Open across Qwen3 models of different scales. Each $r_{i,j}$ denotes the fraction of examples whose prediction changes from correctness $i$ under the non-thinking mode to correctness $j$ under the thinking mode.}
\label{tab:nq-open-tran}
\end{table}

\begin{table}[h]
\centering
\begin{tabular}{ccccc}
\hline

\hline
Size & $r_{0,0}$ & $r_{0,1}$ & $r_{1,0}$ & $r_{1,1}$ \\
\hline
1.7B & 80.98\% & 4.09\% & 4.10\% & 10.84\% \\

4B   & 77.85\% & 5.07\%  & 3.10\% & 13.98\% \\
8B   &  70.79\% & 7.07\% &4.56\% & 17.57\% \\
14B & 68.09\% & 6.16\% & 4.50\% & 21.25\% \\
32B  & 66.01\% & 7.10\% & 4.42\% &  22.47\%  \\
\hline

\hline
\end{tabular}
\caption{Transition ratios between non-thinking and thinking modes on PopQA across Qwen3 models of different scales. Each $r_{i,j}$ denotes the fraction of examples whose prediction changes from correctness $i$ under the non-thinking mode to correctness $j$ under the thinking mode.}
\label{tab:popqa-tran}
\end{table}

\begin{table}[h]
\centering
\begin{tabular}{ccccc}
\hline

\hline
Size & $r_{0,0}$ & $r_{0,1}$ & $r_{1,0}$ & $r_{1,1}$ \\
\hline
1.7B & 9.02\%  & 16.76\%  &  1.84\% & 72.38\% \\
4B   & 4.70\% & 8.04\% & 1.03\% & 86.23\% \\
8B   & 4.02\% & 5.99\% & 0.88\% & 89.11\%  \\
14B & 2.58\% & 5.00\% & 1.34\% & 91.08\%   \\
32B  & 2.58\% & 5.00\% & 1.11\% & 91.31\% \\
\hline

\hline
\end{tabular}
\caption{Transition ratios between non-thinking and thinking modes on GSM8K across Qwen3 models of different scales. Each $r_{i,j}$ denotes the fraction of examples whose prediction changes from correctness $i$ under the non-thinking mode to correctness $j$ under the thinking mode.}
\label{tab:gsm8k-tran}
\end{table}